\renewcommand{\PrintDOI}[1]{\href{http://dx.doi.org/\detokenize{#1}}{doi: \detokenize{#1}}%
	\IfEmptyBibField{pages}{, (to appear in print)}{}}
\theoremstyle{definition}
\newtheorem{theorem}{Theorem}[section]
\newtheorem{lemma}[theorem]{Lemma}
\newtheorem{proposition}[theorem]{Proposition}
\theoremstyle{definition}
\newtheorem{definition}[theorem]{Definition}
\newtheorem{example}[theorem]{Example}
\theoremstyle{remark}
\newtheorem{remark}[theorem]{Remark}
\numberwithin{equation}{section}
\numberwithin{equation}{section}
\title{A TOPOLOGICAL FRAMEWORK FOR DEEP LEARNING}
\author{Mustafa Hajij}
\address{Santa Clara University}
\email{mhajij@scu.edu}
\author{Kyle Istvan}
\email{KyleIstvan@gmail.com}
\date{}
\keywords{}
\dedicatory{}
\begin{document}

	\maketitle 
	
\begin{abstract}
We utilize classical facts from topology to show that the classification problem in machine learning is always solvable under very mild conditions. Furthermore, we show that a softmax classification network acts on an input topological space by a finite sequence of topological moves to achieve the classification task. Moreover, given a training dataset, we show how topological formalism can be used to suggest the appropriate architectural choices for neural networks designed to be trained as classifiers on the data. Finally, we show how the architecture of a neural network cannot be chosen independently from the shape of the underlying data. To demonstrate these results, we provide example datasets and show how they are acted upon by neural nets from this topological perspective.


\end{abstract}


\section{Introduction}

The purpose of this article is to give a high level description of the role topology plays when considering the action of a classification neural network on the domains of its target data's components. This work is driven by the observation that a neural network is essentially a composition of continuous functions. Given that topology provides a rigorous study of continuous functions and the spaces upon which they operate, it is natural to look at neural networks from this perspective.

Denote by $M^n$ to a manifold of dimension $n$ and let $X = M_1^{i_1} \cupdot M_2^{i_2} \cdots  \cupdot M_k^{i_k}$ be a disjoint union of $k$ compact manifolds immersed in some Euclidean space $\mathbb{R}^n$. Let $S$ be a dataset sampled from $X$. We are interested in the following two questions: 

\begin{enumerate}
    \item Suppose that the data set $S$ is labeled. What are the topological constraints the manifolds $M_{k}^{i_k}$ place on the architecture of a neural network defined on $X$ and trained using $S$?
    \item How do the topology and geometry of the activation sets of a classification neural network change as we pass from one layer in the network to the next?
\end{enumerate}

We begin by providing a few definitions that describe the learning problem as a topological one. Once this framework is set, the results follow readily. More specifically, we utilize classical tools from topology to show that the classification problem is always solvable under very mild topological conditions. We show that a softmax classification network acts on an input topological space by a finite sequence of topological moves to realize the classification task. Furthermore, we show how the architecture of a neural network cannot be chosen independently without careful consideration of the shape of the input data. Finally, we demonstrate these results, we provide example datasets and show how they are acted upon by neural nets, from this topological perspective.

\section{Previous Work}
 Over the last decade multiple connections have been made between topology and machine learning.  Perhaps most notable among these is the field of Topological Data Analysis (TDA) \cite{edelsbrunner2010computational,carlsson2009topology}. This includes mixing topological signatures with deep neural networks \cite{hofer2017deep,bruel2019topology,wangtopogan}.
 
 One of the common themes in TDA is that data itself plays a central role in the learning task, and attempts to uncover methods by which underlying structural characteristics of the data itself might be discovered.  A strong motivation for finding such methods is the principle that any learning model we create should not be designed independently of the underlying data. Our goal here is related to TDA from this perspective. 
 
 Whereas TDA is concerned with understanding the structure of data itself, our work aims at determining which architectural choices are appropriate when designing a neural network, subject to certain assumptions on the data.  One might think of it as existing downstream of TDA in the production pipeline; once certain characteristics of a dataset have been uncovered, we use that understanding to inform the design of potential learning models using that dataset.
 
Alternatively, our work here can be regarded as part of the effort in the literature regarding the explainablity of deep learning \cite{hagras2018toward,selvaraju2017grad}. The authors Zeiler et. al. in \cite{zeiler2014visualizing} introduced a visualization technique that gives insight into the intermediate layers of convolutional neural networks. In \cite{yosinski2015understanding} also gives a way to visualize and interpret the a given convolutional network by looking at the activations. Furthermore, Li et. al. \cite{lei2020geometric} demonstrated that natural high dimensional data concentrates close to a low-dimensional manifold and provided experimental evidences showing that the success of deep learning is probably due to the manifold structure in real data. TDA was also used for in \cite{gabrielsson2019exposition} to aid in the interpretation of a deep neural network.

 The earliest hints, that we know of, related to our work appears in a blog by C. Olah \cite{olah2014neural}. Olah performed a number of topological experiments illustrating the importance of considering the topology of the underlying data when making a neural network. In \cite{naitzat2020topology} the activations of a binary classification neural network were considered as point clouds that the layer functions of the network are acting on. The topologies of these activations are then studied using homological tools such as persistent homology \cite{edelsbrunner2010computational}. The problem when studying the activation space as a point cloud is that it is difficult to draw precise conclusions about the behaviour of a neural network in general. On the other hand, our work takes a more rigorous stance and we directly study the topological spaces that the neural networks act on and from which the activations point cloud are sampled. With our topological formalism, for instance, we can rigorously prove statements about the classification problem in general and in particular within the context of neural networks. In \cite{carlsson2020topological} the construction of Mapper \cite{rathore2019topoact} was utilized on the weights of convolutional deep neural networks to develop an understanding of the computations that they perform. In \cite{rathore2019topoact} Rathore et. el. also utilized the Mapper construction to study the shape of the the activations of a neural network.  However, Mapper parameters are in general difficult to tune, and in the latter case, they must also be tuned independently for each layer activation. Moreover, it is hard to interpret the meaning of multiple Mapper graphs on consecutive activations of a network because each Mapper of each activation is studied independently from the other Mapper graphs and there is no clear correspondence between multiple graphs. Finally, its not clear what insights one may generalize to generic neural networks from these visualizations.

Overall, quantitative analysis seems to provide the dominant toolset currently used to interpret any given neural network and gain confidence in its performance.

In the present article we clearly distinguished between data and the functions that operate on it. We believe that this distinction is important because data as a separate mathematical object have complex properties that intertwine non-trivially with the functions, that also have unique properties, that operate on the data. Our work here is to study this intertwining nature between data and the functions that operate on it. We hope to supplement this by providing a rigorous topological framework in which neural networks may be studied.

\section{Supervised Machine Learning From A Topological Perspective}
\label{first section}
The purpose of the next two sections is to study the classification problem in a topological setting. Supervised learning problems are typically presented in a statistical context.  We start by outlining the problem in a precise statistical setting, in which the topological nature of both the dataset and the learning models is not traditionally present.  We then recast the classification problem in a topological environment in order to demonstrate the importance of these topological considerations.   

\subsection{Supervised Machine Learning: Statistical Setting}
\label{stat}
An \textit{instance} $x$ is a vector of a Euclidean space $\mathbb{R}^n$. A \textit{label} is a point from an arbitrary set $\mathcal{Y}$. In the classification setting the set $\mathcal{Y}$ is finite, whereas in the regression setting $\mathcal{Y}$ is infinite, typically some Euclidean space.  Denote by $\mathcal{X}\subset \mathbb{R}^n$ to the space of all instances. Let $P(x,y)$ be an unknown joint probability distribution on $\mathcal{X}\times \mathcal{Y}$. Let $S = \{(x_i,y_i)\}_{i=1}^n$ be a training dataset sampled from the probability distribution $P(x,y)$. In supervised classification we are seeking a function $f:\mathcal{X}\longrightarrow \mathcal{Y}$ with the goal is that $f(x)$ predicts the true label $y$ on a future $x$ where $(x,y) \sim P(x,y) $. 



Typically, finding such a function $f$ is done by defining a \textit{cost function} $c$ that penalizes the deviation of predicted labels ${f(x_i)}$ from the true labels ${y_i}$. With this setting a best function $f^{*}$ is one that minimizes the expected value of this cost function; that is, 

\begin{equation}
    f^*= argmin_{f\in \mathcal{F}}E_{(x,y)\sim P } c(x,y,f(x) )
\end{equation}
 where $\mathcal{F}$ is \textit{hypothesis space}, the space of all possible functions $f:\mathcal{X} \longrightarrow \mathcal{Y}$ that we are willing (or able) to consider in a given learning problem. Typically the space $\mathcal{F}$ is a strictly smaller space than  $C(\mathcal{X},\mathcal{Y})$ the space of all functions from $\mathcal{X}$ to $\mathcal{Y}$.


\subsection{Supervised Machine Learning: Topologically Setting}
We now present the topological definition corresponding to the statistical setting presented in Section \ref{stat}. We restrict ourselves to the classification setting where the set $\mathcal{Y}$ of labels is finite.



Let $X = M_1^{i_1} \cupdot M_2^{i_2} \cdots  \cupdot M_k^{i_k}$ be a disjoint union of $k$ compact manifolds. Let $h: X \longrightarrow \mathbb{R}^n$ be a continuous function on $X$. Denote by $\mathcal{X}$ to the image set $h(X)$ of the manifold $X$ inside $\mathbb{R}^n$.  We refer to the pair $(X,h)$ as \textit{topological data}. A \textit{topological labeling} on $\mathcal{X}$ is subset $\mathcal{X}_L \subseteq \mathcal{X} $ that can be written as a finite collection of closed disjoint sets $\mathcal{X}_1,\cdots \mathcal{X}_d$ in $\mathcal{X}$. Written differently, a topological labeling on $\mathcal{X}$ is a closed subset $\mathcal{X}_L \subseteq \mathcal{X} $ along with a surjective continuous function $g : \mathcal{X}_L \to \mathcal{Y}$ where $\mathcal{Y} = \{l_1, \cdots , l_d\}$ is a finite set given the discrete topology. The triplet $(X,h,g)$ will be called \textit{topologically labeled data}. 

A few remarks here must be made about the above definition. The set $X$ represents an abstract topological manifestation of the data. The space $\mathbb{R}^n$ corresponds in the statistical setting to the ambient space of a probability distribution $\mu$ from which we sample the data. The support of $\mu$ is the set $ h(X):=\mathcal{X}$. 

The assumption that the data lives on a manifold-like structure is known in the literature  \cite{fefferman2016testing,lei2020geometric,korman2018autoencoding}. While we make this assumption here, it not strictly necessary anywhere in our proofs. More importantly, in our definition above we clearly distinguish between the set $X$ and its image $\mathcal{X}$ in some Euclidean space given by the function $h: X \longrightarrow \mathbb{R}^n$ \footnote{Doing computations on manifolds typically require having a parameterization of that manifold. Note that in our setting $h$ is not required to be a parameterization, only continuity is assumed.}. We make this distinction for many reasons. First, a topological data is in essence just merely the space $X$. However, in practice to do computations on $X$ one must have a \textit{manifestation} of this data inside some Euclidean space. The continuity of the map $h$ guarantees that some aspects of $X$ are preserved \footnote{This is in contrast with a  parameterization map, or a chart, of a manifold where all local aspects of the manifold $X$ are preserved via this map.}, but it is really the proprieties of $X$ itself that are of interest. Second, as we will demonstrate later, the dimension of the Euclidean space $\mathbb{R}^n$ will put constraints on the model required to describe certain aspects of $X$.

With the above setting we now demonstrate how to realize the classification problem as a topological problem. In what follows we set $\mathcal{X}_j$ to denote $g^{-1}(l_j)$ for $l_j \in \mathcal{Y}$.

\begin{definition}
\label{def1}
Let $(X,h,g)$ be topologically labeled data with, $h: X \longrightarrow \mathcal{X} \subset \mathbb{R}^{d_{in}}$ and $g:  \mathcal{X}_L \subseteq \mathcal{X}\longrightarrow \mathcal{Y} $ where $|\mathcal{Y}|=d_{out}$. Let $f:\mathbb{R}^{d_{in}}\longrightarrow \mathbb{R}^k $ be a continuous function. We say that $f$ \textit{separates} the topologically labeled data $(X,h,g)$ if we can find $d_{out}$ disjoint embedded $k$-dimensional discs $D_1, \cdots , D_{d_{out}}$ in $\mathbb{R}^k$ such that $f(\mathcal{X}_j)\subset D_j$ for $1 \leq j\leq d_{out}$.
\end{definition}

The preceding description is an abstract rewording of the classification problem given in Section \ref{stat}. A successful classifier tries to \textit{separate} the labeled data by mapping the raw input data into another space where this data can be separated easily according to the given class. This mapping is represented by the function $f$ in Definition \ref{def1} where this function maps the space $\mathcal{X}$ from its ambient space $\mathbb{R}^{d_{in}}$ to another ambient space $\mathbb{R}^k$ such that $f(\mathcal{X}_L)$ can be separated by $k$-dimensional embedded disks and each disk contains a subset of $f(\mathcal{X}_L)$ that corresponds to a single label from $\mathcal{Y}$.

The function $f$ represents the learning function that we usually like to compute in practice.  Note that within this setting, the classification problem presented in Section \ref{stat} is now purely topological. It follows that we expect the tools that we will utilize to study the classification problem are also topological in nature. The first question, which we will address in the following section, is one of existence: given topologically labeled data $(X,h,g)$ when can we find a function $f$ that separates this data?  

\begin{example}

In general, a topologically labeled data can be knotted, linked and entangled together in a non-trivial manner by the embedding $h$, and the existence of a function $f$ that separates this data is not immediate. See Figure \ref{unlinking}. 

\begin{figure}[h]
  \centering
   {\includegraphics[width=0.4\textwidth]{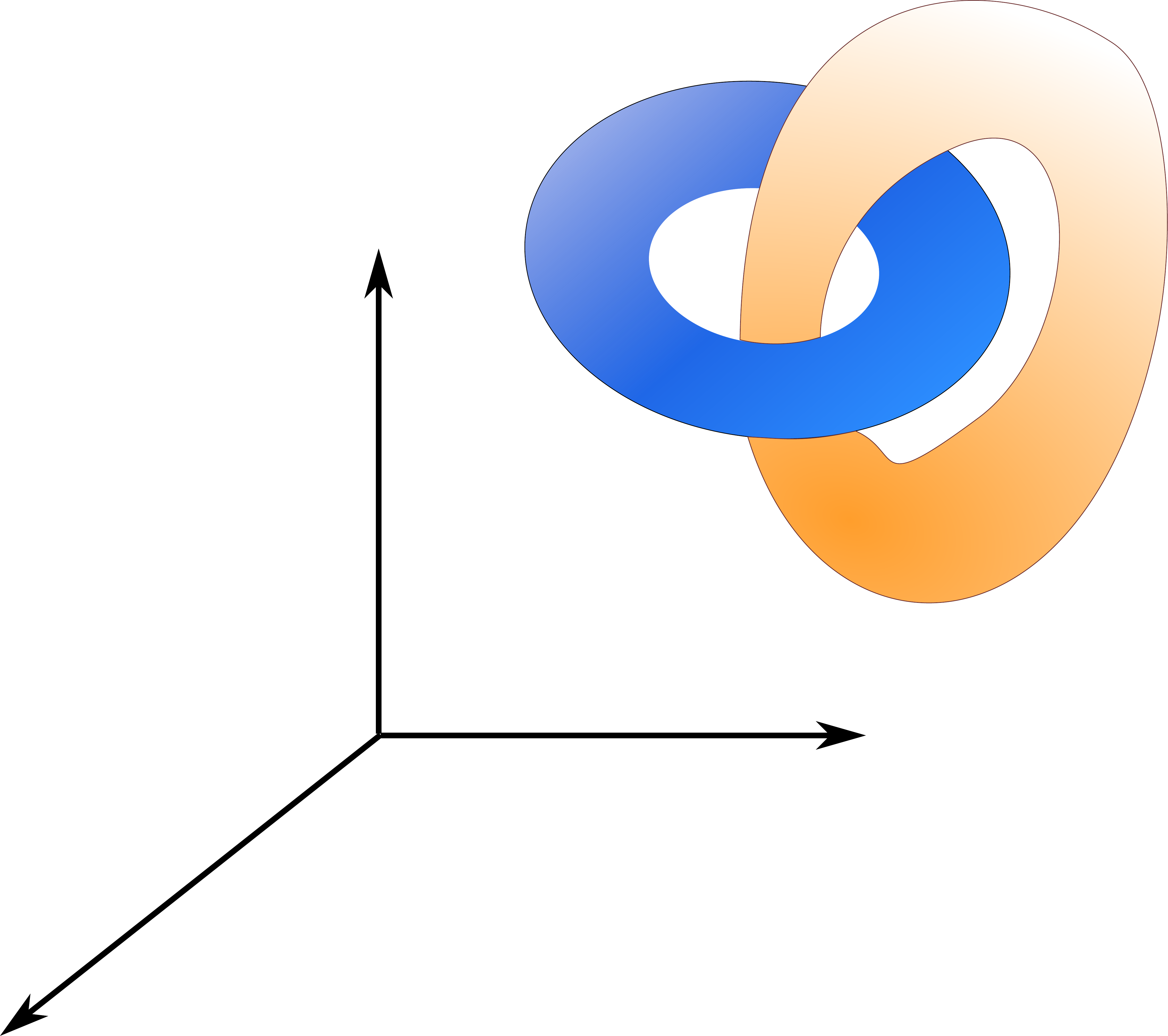}
    \caption{An example of linked topologically labeled data. }
  \label{unlinking}}
\end{figure}

\end{example}


\begin{remark}
Note that the topological setting presented above is consistent with the statistical setting given in \ref{stat}. Indeed, a \textit{statistical data} is defined to be a function $\hat{g}: S \to \mathcal{Y} $ where $S \subset \mathcal{X}$ is finite set. However, the questions that we address here are independent from the restriction $\hat{g}$. In other words, regardless of statistical data that we might be given, what effects do the original space $X$ and the choice of the embedding $h$ have on the final classification problem?
\end{remark}

\section{Urysohn's Lemma  and Supervised Classification Problems}
This topological setting yields immediate insights into the classification problem.  Towards this goal, recall the following theorem from basic topology, which for the purposes of clarity we do not present in its fullest generality.

\begin{lemma}
(Urysohn’s Lemma) Let $A,B$ be disjoint closed subsets of a normal topological space $X$. There exists a continuous function $f: X \longrightarrow [0,1] $ such that $f(A)=0$ and $f(B)=1$.
\end{lemma}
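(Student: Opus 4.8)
The plan is to give the classical proof via an interpolating family of open sets indexed by the dyadic rationals. First I would record normality in the following packaged form: whenever $C \subseteq U$ with $C$ closed and $U$ open, there is an open $V$ with $C \subseteq V \subseteq \overline{V} \subseteq U$; this is immediate from applying the definition of normality to the disjoint closed sets $C$ and $X \setminus U$. Setting $U_1 = X \setminus B$, the set $A$ is closed and contained in the open set $U_1$, so this statement produces an open $U_0$ with $A \subseteq U_0 \subseteq \overline{U_0} \subseteq U_1$.

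Next I would enumerate the dyadic rationals in $(0,1)$ as $p_1, p_2, \ldots$ and construct open sets $U_{p_n}$ by induction on $n$, maintaining the invariant that $\overline{U_p} \subseteq U_q$ whenever $p < q$ among all indices constructed so far. At stage $n$ the rational $p_n$ has an immediate predecessor $p$ and an immediate successor $q$ within the finite set $\{0,1,p_1,\dots,p_n\}$; by the inductive hypothesis $\overline{U_p} \subseteq U_q$, and the packaged normality statement yields an open $U_{p_n}$ with $\overline{U_p} \subseteq U_{p_n} \subseteq \overline{U_{p_n}} \subseteq U_q$. Extending the notation by $U_p = \emptyset$ for $p < 0$ and $U_p = X$ for $p > 1$, I would then define
\[
f(x) = \inf\{\, p : x \in U_p \,\}.
\]
It is then immediate that $f$ takes values in $[0,1]$, that $f \equiv 0$ on $A$ (since $A \subseteq U_0 \subseteq U_p$ for every $p \ge 0$), and that $f \equiv 1$ on $B$ (since a point of $B$ lies in no $U_p$ with $p \le 1$).

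Finally I would verify continuity. The two elementary observations that carry the argument are: if $x \in \overline{U_r}$ then $f(x) \le r$, and if $x \notin U_r$ then $f(x) \ge r$, both following directly from the nesting $\overline{U_p} \subseteq U_q$ for $p < q$. From these one obtains $f^{-1}\big([0,a)\big) = \bigcup_{p < a} U_p$ and $f^{-1}\big((a,1]\big) = \bigcup_{p > a}\big(X \setminus \overline{U_p}\big)$, each a union of open sets; since the sets $[0,a)$ and $(a,1]$ form a subbasis for the topology of $[0,1]$, this proves $f$ is continuous. I expect the main obstacle to be the inductive step: one must arrange the enumeration and the ``immediate predecessor/successor in a finite set'' bookkeeping carefully so that a single application of normality at each stage preserves the closure-nesting condition for \emph{all} pairs of constructed indices, not merely for the newly adjacent ones.
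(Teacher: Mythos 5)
Your argument is the standard textbook proof of Urysohn's Lemma (dyadic indexing of open sets via iterated normality, $f(x)=\inf\{p : x\in U_p\}$, continuity via the subbasis $[0,a)$, $(a,1]$), and it is correct, including the inductive bookkeeping you flag: inserting $U_{p_n}$ between its immediate predecessor and successor in the finite set of already-used indices does preserve the nesting for all pairs, by transitivity of the inclusions $\overline{U_r}\subseteq U_s$. Note that the paper offers no proof to compare against --- it explicitly \emph{recalls} Urysohn's Lemma as a classical fact from basic topology and uses it as a black box in Proposition 4.2 and Lemma 4.4 --- so your write-up simply supplies the standard argument the authors take for granted.
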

We start with the binary classification problem, namely when $|\mathcal{Y}|=2$. We have the following proposition:

\begin{proposition}
\label{one}
Let $(X,h,g)$ by a topologically labeled data with $h: X \to \mathcal{X} \subset  R^{d_{in}}$ and $g: \mathcal{X}_L \subset \mathcal{X} \to \{l_1, l_2\}$. Then there exists a continuous map $f: \mathbb{R}^{d_{in}} \to \mathbb{R}$ that separates $(X,h,g)$.
\end{proposition}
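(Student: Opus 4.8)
The plan is to reduce the statement directly to Urysohn's Lemma applied to the ambient Euclidean space $\mathbb{R}^{d_{in}}$. The key point is that the two label sets $\mathcal{X}_1 = g^{-1}(l_1)$ and $\mathcal{X}_2 = g^{-1}(l_2)$ are disjoint closed subsets of $\mathbb{R}^{d_{in}}$, and $\mathbb{R}^{d_{in}}$, being a metric space, is normal; so Urysohn's Lemma hands us essentially the separating function we want, and only an elementary rescaling remains.

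First I would verify the closedness claim by chasing the definitions. Since $X$ is a disjoint union of finitely many compact manifolds it is compact, and $h$ is continuous, so $\mathcal{X} = h(X)$ is a compact subset of the Hausdorff space $\mathbb{R}^{d_{in}}$, hence closed in $\mathbb{R}^{d_{in}}$. By the definition of a topological labeling, $\mathcal{X}_1$ and $\mathcal{X}_2$ are disjoint sets that are closed in $\mathcal{X}$ (equivalently: $\mathcal{X}_L$ is closed in $\mathcal{X}$, each $\{l_j\}$ is closed in the discrete space $\mathcal{Y}$, and $g$ is continuous, so $\mathcal{X}_j = g^{-1}(l_j)$ is closed in $\mathcal{X}_L$). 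In either case, being closed in a set that is itself closed in $\mathbb{R}^{d_{in}}$, each $\mathcal{X}_j$ is closed in $\mathbb{R}^{d_{in}}$, and $\mathcal{X}_1 \cap \mathcal{X}_2 = \emptyset$ because $g$ is a function and $l_1 \ne l_2$.

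Next I would apply Urysohn's Lemma with $A = \mathcal{X}_1$, $B = \mathcal{X}_2$, and the normal space $\mathbb{R}^{d_{in}}$, obtaining a continuous map $f : \mathbb{R}^{d_{in}} \to [0,1] \subset \mathbb{R}$ with $f|_{\mathcal{X}_1} \equiv 0$ and $f|_{\mathcal{X}_2} \equiv 1$. To match Definition \ref{def1} with $k = 1$ and $d_{out} = 2$, it remains to exhibit two disjoint embedded $1$-dimensional discs in $\mathbb{R}$: take $D_1 = [-\tfrac14, \tfrac14]$ and $D_2 = [\tfrac34, \tfrac54]$. These are disjoint closed intervals (hence embedded $1$-discs), with $f(\mathcal{X}_1) = \{0\} \subset D_1$ and $f(\mathcal{X}_2) = \{1\} \subset D_2$, so $f$ separates $(X,h,g)$.

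I do not expect a genuine obstacle here; the only step needing care is the bookkeeping in the first paragraph — in particular, noticing that it is compactness of $X$ (and not any embedding property of $h$) that promotes ``closed in $\mathcal{X}$'' to ``closed in $\mathbb{R}^{d_{in}}$'', which is precisely what licenses invoking Urysohn's Lemma on the ambient space. As a remark, one could bypass compactness altogether by running Urysohn's Lemma inside the normal subspace $\mathcal{X}_L$ (or $\mathcal{X}$) and then using the Tietze extension theorem to extend the resulting map to all of $\mathbb{R}^{d_{in}}$.
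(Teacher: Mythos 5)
Your proof is correct, and it differs from the paper's in one small but genuine way: the paper first applies Urysohn's Lemma \emph{inside the subspace} $\mathcal{X}$ to get $f^*:\mathcal{X}\to[0,1]$ and then invokes the Tietze extension theorem to extend $f^*$ to all of $\mathbb{R}^{d_{in}}$, whereas you apply Urysohn's Lemma \emph{directly on the ambient space} $\mathbb{R}^{d_{in}}$, which lets you skip Tietze entirely. (You note the paper's route yourself in your closing remark.) Both arguments ultimately rest on the same fact — that $\mathcal{X}$ is closed in $\mathbb{R}^{d_{in}}$ — the paper to justify Tietze, and you to promote ``closed in $\mathcal{X}$'' to ``closed in $\mathbb{R}^{d_{in}}$''; your write-up has the advantage of actually justifying this closedness via compactness of $X$ and continuity of $h$, a point the paper asserts without proof. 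You also explicitly exhibit the two disjoint embedded $1$-discs required by Definition~\ref{def1}, which the paper leaves implicit. Nothing is missing from your argument.
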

\begin{proof}
The label function  $g: \mathcal{X}_L \longrightarrow \{l_1, l_2\}$ induces a partition on $\mathcal{X}_L$ into two disjoint sets $\mathcal{X}_1$ and $\mathcal{X}_2$ simply by setting $\mathcal{X}_i=g^{-1}(l_i)$, for $i=1,2$.   By definition the sets $\mathcal{X}_1$ and $\mathcal{X}_2$ are closed in $\mathcal{X}$ and disjoint. By Urysohn's lemma there exists a function $f^*:\mathcal{X} \longrightarrow [0,1]$ such that $f^*(\mathcal{X}_1)=0$ and $f^*(\mathcal{X}_2)=1$. Since $\mathcal{X}$ is closed in $\mathbb{R}^{d_{in}}$ then by Tietze extension theorem there exists an extension of $f^*$ to a continuous function $f : \mathbb{R}^{d_{in}} \to \mathbb{R} $ such that $f^*(\mathcal{X})=f(\mathcal{X})$.  In particular, $f(\mathcal{X}_1)=0$ and $f(\mathcal{X}_2)=1$. Hence the function $f$ separates $(X,h,g)$.
\end{proof}

Proposition \ref{one} can be easily generalized to obtain functions that separate $(X,h,g)$ in any Euclidean space $\mathbb{R}^k$. Namely, for any $k\geq 1$ there exists a continuous map $F: \mathbb{R}^{d_{in}} \longrightarrow \mathbb{R}^{k}$ that separates $(X,h,g)$. This can be done by defining $F = (f_1 , f_2)$ where $f_1:\mathbb{R}^{d_{in}}\longrightarrow [0,1]$ is the continuous function guaranteed by Proposition \ref{one} and $f_2:\mathbb{R}^{d_{in}}\longrightarrow \mathbb{R}^{k-1}$ is an arbitrary continuous function. This function $F$ clearly separates  $(X,h,g)$. We record this fact in the following proposition.
\begin{proposition}
\label{second}
Let $(X,h,g)$ by a topologically labeled data with  $h: X \to \mathcal{X} \subset R^{d_{in}}$ and $g: \mathcal{X}_L \subset \mathcal{X} \to \{l_1, l_2\}$. Then for any $k\geq 1$ there exists a continuous map $f: \mathbb{R}^{d_{in}} \to \mathbb{R}^{k}$ that separates $(X,h,g)$.
\end{proposition}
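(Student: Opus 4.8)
The plan is to follow the product construction sketched just before the statement, filling in the one detail it leaves implicit, namely the explicit choice of the two discs. Let $f_1\colon \mathbb{R}^{d_{in}}\to [0,1]$ be the continuous function furnished by Proposition~\ref{one}, so that $f_1(\mathcal{X}_1)=\{0\}$ and $f_1(\mathcal{X}_2)=\{1\}$, and let $f_2\colon \mathbb{R}^{d_{in}}\to \mathbb{R}^{k-1}$ be an arbitrary continuous map (for concreteness, $f_2\equiv 0$ will do). Set $f=(f_1,f_2)\colon \mathbb{R}^{d_{in}}\to \mathbb{R}^{k}$; this is continuous because each of its coordinate functions is. By construction $f(\mathcal{X}_1)\subseteq \{0\}\times \mathbb{R}^{k-1}$ and $f(\mathcal{X}_2)\subseteq \{1\}\times \mathbb{R}^{k-1}$, so the two image sets sit in the disjoint parallel hyperplanes $\{x_1=0\}$ and $\{x_1=1\}$ of $\mathbb{R}^{k}$; the remaining coordinates $f_2$ play no role and are there only to land in the prescribed target dimension.

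The one step that needs a word of justification is that these hyperplane slices can be thickened to honest embedded $k$-dimensional discs, and this is where compactness enters. Since $X$ is a finite disjoint union of compact manifolds it is compact, hence $\mathcal{X}=h(X)$ is compact, and therefore $f(\mathcal{X})$ is compact and in particular bounded. So there is an $R>0$ with $f(\mathcal{X}_1)\cup f(\mathcal{X}_2)\subseteq \mathbb{R}\times[-R,R]^{k-1}$. Fix any $\varepsilon$ with $0<\varepsilon<1/2$ and put
\[
D_1=[-\varepsilon,\varepsilon]\times[-R,R]^{k-1},\qquad D_2=[1-\varepsilon,1+\varepsilon]\times[-R,R]^{k-1}.
\]
Each $D_j$ is a product of compact intervals, hence homeomorphic to the standard $k$-disc and embedded as a subspace of $\mathbb{R}^{k}$; they are disjoint because their first coordinates range over the disjoint intervals $[-\varepsilon,\varepsilon]$ and $[1-\varepsilon,1+\varepsilon]$; and $f(\mathcal{X}_j)\subseteq D_j$ for $j=1,2$. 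By Definition~\ref{def1} this exhibits $f$ as separating $(X,h,g)$, and since $k\geq 1$ was arbitrary the proposition follows. (For $k=1$ the factor $[-R,R]^{k-1}$ and the map $f_2$ are trivial and one simply recovers Proposition~\ref{one}.)

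As for difficulty: there is essentially no obstacle — the statement is a routine corollary of Proposition~\ref{one}. The only point that is not purely formal is the insistence on \emph{genuine} $k$-dimensional embedded discs rather than merely disjoint closed sets (two disjoint compact sets need not fit inside two disjoint discs in general), but the flatness of the two hyperplanes together with the compactness of $\mathcal{X}$ makes the slab construction above immediate; if one prefers round discs, rescale $f_2$ to $\delta f_2$ for small $\delta$ and take small balls about $(0,\mathbf{0})$ and $(1,\mathbf{0})$.
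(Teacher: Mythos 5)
Your proof is correct and follows essentially the same route as the paper: the paper constructs $F=(f_1,f_2)$ with $f_1$ from Proposition~\ref{one} and $f_2$ arbitrary, and simply asserts that this "clearly separates" the data. You add the one detail the paper omits — using compactness of $\mathcal{X}$ to produce explicit disjoint embedded $k$-discs containing the two images — which is a worthwhile bit of rigor but not a different argument.
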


Proposition \ref{second} can be generalized to the case when the set $\mathcal{Y}$ has an arbitrary finite size. We first need the following generalization of Urysohn’s Lemma on $n$ sets.

\begin{lemma}
\label{general}
Let $A_1,A_2, \cdots  A_n$ be closed and mutually disjoint sets in a normal space $X$.
Then there exists a continuous map $f: X \longrightarrow \mathbb{R}$ such that $f(A_i) = i$ for $0 \leq i \leq n$.
\end{lemma}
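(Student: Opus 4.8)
The plan is to reduce the statement to finitely many applications of the ordinary Urysohn Lemma and then glue the resulting functions together by a weighted sum. The key observation is that, for each fixed index $i$, the set $A_i$ and the set $B_i := \bigcup_{j \neq i} A_j$ are disjoint closed subsets of $X$: disjointness is immediate from the hypothesis that the $A_j$ are mutually disjoint, and $B_i$ is closed because it is a \emph{finite} union of closed sets. Since $X$ is normal, Urysohn's Lemma then supplies a continuous function $g_i : X \longrightarrow [0,1]$ with $g_i(A_i) = \{1\}$ and $g_i(B_i) = \{0\}$; in particular $g_i$ vanishes on every $A_j$ with $j \neq i$.

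Next I would define $f : X \longrightarrow \mathbb{R}$ by $f = \sum_i i\, g_i$, the sum running over the (finitely many) indices labelling the given closed sets. This $f$ is continuous, being a finite linear combination of continuous functions. Evaluating at a point $x \in A_i$: each $g_j$ with $j \neq i$ kills $x$ while $g_i(x) = 1$, so $f(x) = i$. Hence $f(A_i) = \{i\}$ for every $i$, which is exactly the assertion. (The displayed range $0 \le i \le n$ in the statement is cosmetic: if one literally wants an $A_0$ in the list, one just includes it among the mutually disjoint closed sets and reruns the argument; any relabelling of the finitely many sets changes nothing.)

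I do not anticipate a genuine obstacle here; the only points deserving a word of care are (i) verifying that each $B_i$ is closed, which is where finiteness of the family $\{A_j\}$ is used, and (ii) observing that normality of $X$ is precisely the hypothesis Urysohn requires. An alternative route is induction on $n$, peeling off $A_n$ at each stage, but it forces one to adjust the previously constructed function in a neighbourhood of $A_n$, whereas the weighted-sum construction is cleaner and makes the role of finiteness transparent; I would therefore present the weighted-sum proof. I would also remark that, exactly as Proposition~\ref{second} follows from Proposition~\ref{one}, this lemma upgrades the separation result to an arbitrary finite label set $\mathcal{Y}$ by composing $f$ with an embedding of the $n$ target values into disjoint $k$-dimensional discs.
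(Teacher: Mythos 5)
Your proof is correct and takes essentially the same approach as the paper's: both reduce the statement to finitely many applications of Urysohn's Lemma on the normal space $X$ and then combine the resulting $[0,1]$-valued functions by a finite sum. The only (immaterial) difference is the choice of separating pairs --- you separate each $A_i$ from $\bigcup_{j\neq i} A_j$ and weight by $i$, whereas the paper separates the cumulative unions $A_1\cup\cdots\cup A_{i-1}$ from $A_i\cup\cdots\cup A_n$ and sums with unit weights; both yield $f(A_i)=i$ by the same one-line verification.
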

\begin{proof}
 Consider $f_i$ be continuous functions to $[0,1]$ with $f_i( A_1 \cup \cdots \cup A_{i-1} )=0$ and $f_i(A_{i}  \cup \cdots \cup A_{n})=1$, guaranteed by Urysohn's Lemma. Then $f=\sum_{i=1}^n f_i $ a continuous function with the desired properties.
\end{proof}
Combining Lemma \ref{general} and Proposition \ref{second}, we make the following theorem asserting the existence of a function $f$ that separates any given topologically labeled data.

\begin{theorem}
\label{generalization TLD}
Let $(X,h,g)$ be topologically labeled data with  $h: X \to \mathcal{X}\subset R^{d_{in}}$ and $g: \mathcal{X}_L \subset \mathcal{X} \to \mathcal{Y}$.  Then there exists a continuous map $f: \mathbb{R}^{d_{in}} \to \mathbb{R}^k$ that separates $(X,h,g)$ for any integer $ k \geq 1$.
\end{theorem}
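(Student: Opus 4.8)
The plan is to run the Urysohn-plus-Tietze argument already used for Propositions \ref{one} and \ref{second}, but to feed it the $n$-set version of Urysohn's lemma (Lemma \ref{general}) in place of the two-set version. Write $d = d_{out} = |\mathcal{Y}|$ and put $\mathcal{X}_j = g^{-1}(l_j)$ for $j = 1,\dots,d$; by the definition of a topological labeling these are mutually disjoint sets, each closed in $\mathcal{X}$. Since $X$ is compact and $h$ is continuous, $\mathcal{X} = h(X)$ is compact, hence a closed subset of the normal space $\mathbb{R}^{d_{in}}$, and $\mathcal{X}$ is itself normal.

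First I would apply Lemma \ref{general} to the closed mutually disjoint sets $\mathcal{X}_1,\dots,\mathcal{X}_d$ in $\mathcal{X}$, obtaining a continuous $\phi\colon \mathcal{X}\to\mathbb{R}$ with $\phi(\mathcal{X}_j)=\{j\}$ for each $j$. Because $\mathcal{X}$ is closed in $\mathbb{R}^{d_{in}}$, the Tietze extension theorem then yields a continuous extension $\widehat{\phi}\colon\mathbb{R}^{d_{in}}\to\mathbb{R}$ with $\widehat{\phi}|_{\mathcal{X}}=\phi$, so still $\widehat{\phi}(\mathcal{X}_j)=\{j\}$. This already disposes of $k=1$: the closed intervals $D_j=[\,j-1/4,\,j+1/4\,]$ are pairwise disjoint embedded $1$-discs with $\widehat{\phi}(\mathcal{X}_j)\subset D_j$.

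For general $k\ge 2$ I would mimic the step from Proposition \ref{one} to Proposition \ref{second}: set $f\colon\mathbb{R}^{d_{in}}\to\mathbb{R}^{k}$ by $f(x)=(\widehat{\phi}(x),0,\dots,0)$. Then $f(\mathcal{X}_j)=\{p_j\}$ with $p_j=(j,0,\dots,0)$, the points $p_1,\dots,p_d$ are pairwise at distance at least $1$, and the closed $k$-discs $D_j=\overline{B}(p_j,1/4)$ are embedded, mutually disjoint, and contain $f(\mathcal{X}_j)$. Hence $f$ separates $(X,h,g)$ for every $k\ge 1$. One could instead pad with an arbitrary \emph{bounded} continuous $f_2\colon\mathbb{R}^{d_{in}}\to\mathbb{R}^{k-1}$ and enlarge the $D_j$, using compactness of each $\mathcal{X}_j$, but the constant padding is cleanest.

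There is no real obstacle here; the argument is essentially bookkeeping once Lemma \ref{general} is available. The two points deserving a moment's attention are that $\mathcal{X}$ must be closed in $\mathbb{R}^{d_{in}}$ for Tietze to apply — this is precisely where compactness of $X$ enters — and that the target discs must be genuinely disjoint. The latter is the only place a careless ``take $f_2$ arbitrary'' could fail, since an unbounded or spread-out $f_2$ need not admit disjoint enclosing discs (consecutive hyperplanes $\{x_1=j\}$ sit only unit distance apart), and it is avoided by collapsing each $f(\mathcal{X}_j)$ to a single point.
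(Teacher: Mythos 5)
Your proof is correct and follows the same route the paper intends: apply the $n$-set Urysohn lemma (Lemma~\ref{general}) on $\mathcal{X}$, extend by Tietze using that $\mathcal{X}=h(X)$ is compact hence closed in $\mathbb{R}^{d_{in}}$, and pad to $\mathbb{R}^k$ as in the passage from Proposition~\ref{one} to Proposition~\ref{second}. Your choice of constant padding is in fact slightly more careful than the paper's ``arbitrary continuous $f_2$,'' since it makes the disjointness of the target discs immediate rather than relying on compactness of the images to fit them into disjoint embedded discs.
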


Uryson's lemma holds also in the smooth and PL-categories. In other words, if the space $X$ is a smooth or a PL-manifold then we can also find a smooth or a PL- function $f$ that satisfies the above condition. This makes the above results applicable with working with these categories as well. 


\subsection{Topologically Separable Spaces and Universality of Neural Networks}
So far our discussion has been purely topological and we have not discussed the relationship between the topological framework and neural networks. We give a few definitions and show that this setup yields immediate consequences when considering the capabilities of a neural network.

A \textit{neural network}, or simply a \textit{network}, is a function $Net: \mathbb{R}^ {d_{in}} \longrightarrow \mathbb{R}^{d_{out}}$ defined by a composition of the form: 
\begin{equation}
\label{Net}
    Net:=f_{L} \circ \cdots \circ f_{1}
\end{equation}
where the functions $f_{i}$, $1 \leq i \leq L $ called the \textit{layer functions}. A layer function $f_i:\mathbb{ R }^{n_i} \longrightarrow \mathbb{ R }^{m_i} $ is typically a continuous, a piece-wise smooth or a smooth function of the following form: $f_i(x)=\sigma (W_i(x)+b_i)$ where $W_i$ is  an $m_i\times n_i$ matrix, $b_i$ is a vector in $\mathbb{R}^{m_i} $, and $\sigma :\mathbb{R}\longrightarrow \mathbb{R} $ is an appropriately chosen nonlinear function that is applied coordinate-wise on an input vector $(z_1,\cdots,z_{m_i} ) $ to get a vector $(\sigma( z_1),\cdots,\sigma(z_{m_i}))$. In the previous section we showed that for any topologically labeled data we can find a continuous function $f$ that separates this data. A natural question to ask given a topologically labeled data $(X,g,h)$ can we always find a network \ref{Net} that separates this data? A network of the form \ref{Net} is clearly a continuous function since it's a composition of such functions but it is not immediately clear the function $f$ can be chosen to be of the form \ref{Net}.


While the space of all neural networks is relatively small in comparison to the space of all continuous functions, it is dense inside the space of all continuous functions with respect to an appropriately chosen functional norm, thanks to the so called universality of neural networks \cite{cybenko1989approximations,hanin2017approximating,lu2017expressive} \footnote{The universal approximation theorem is available in many flavors : one may fix the depth of the network and vary the width or the other way around.}. The universality of neural networks essentially states that for any continuous function $f$ we can find a network that approximates it to an arbitrary precision\footnote{The closeness between functions is with respect to an appropriate functional norm. See \cite{cybenko1989approximations,lu2017expressive} for more details. }. Hence we conclude that any topologically labeled data can effectively be separated by a neural network. 



\subsection{Shape Of Data and Neural Networks}
We discuss now briefly how the input shape of data is essential when deciding on the architecture of the neural network. Consider for instance the data depicted in Figure \ref{unlinking}. This data consists of two links in $\mathbb{R}^3$. The reader might convince herself, at least intuitively, that one needs at least $4$ dimension in order to unlink this space. We talk more about this point in Section \ref{conclusions}.

Theorem \ref{2222} shows if we are not careful about the choice of the first layer function of a network then we can always find topologically labeled data that cannot be separated by this network.

\begin{theorem}
\label{2222}
Let $Net$ be neural  network  of  the  form :
$Net=Net_1 \circ f_1$
with $f_1:\mathbb{R}^n\longrightarrow \mathbb{R}^k$ such that $f_1(x)= \sigma( W(x)+b)$ and $k<n$ and $Net_1 : \mathbb{R}^k\longrightarrow \mathbb{R}^d $ is an arbitrary net. Then there exists a topologically labeled data $(X,h,g)$ with $h:X \to \mathcal{X} \subset \mathbb{R}^n$  and $g: \mathcal{X}_L  \subset \mathcal{X}\longrightarrow \mathbb{R}^d $ that is not separable by $Net$.
\end{theorem}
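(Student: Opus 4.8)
The plan is to exploit the fact that the dimension drop $k<n$ forces the first layer function $f_1$ to be non-injective, and then to build topologically labeled data with two labels whose classes are collapsed onto a common image by $Net$. First I would record the elementary observation: $W$ is a $k\times n$ matrix with $k<n$, so $\operatorname{rank} W\le k<n$ and hence $\ker W$ has dimension at least $n-k\ge 1$. Fix a nonzero vector $v\in\ker W$. Since $\sigma$ is applied coordinatewise and $Wv=0$, for every $p\in\mathbb{R}^n$ we get $f_1(p+v)=\sigma\bigl(W(p+v)+b\bigr)=\sigma\bigl(Wp+b\bigr)=f_1(p)$, and therefore $Net(p+v)=Net_1\bigl(f_1(p+v)\bigr)=Net_1\bigl(f_1(p)\bigr)=Net(p)$. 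In other words $Net$ is constant along every translate of the line $\mathbb{R}v$, no matter what the nonlinearity $\sigma$ and the outer network $Net_1$ are.

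Next I would construct the offending data. Pick any compact manifold $M$ (a single point already suffices, in which case $X$ is just two compact $0$-manifolds) and a point $p_0\in\mathbb{R}^n$, and arrange, by rescaling $M$ if necessary, that the two translated copies $M_0:=p_0+M$ and $M_1:=p_0+v+M$ are disjoint subsets of $\mathbb{R}^n$. Let $X=M_1^{i}\cupdot M_2^{i}$ be the abstract disjoint union of two copies of $M$, let $h:X\to\mathbb{R}^n$ map the first copy homeomorphically onto $M_0$ and the second onto $M_1$ (this $h$ is continuous, indeed an embedding), and set $\mathcal{X}=h(X)=M_0\cupdot M_1$, $\mathcal{X}_L=\mathcal{X}$, with $g$ equal to $l_1$ on $M_0$ and $l_2$ on $M_1$. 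Because $M$ is compact, $\mathcal{X}_1=M_0$ and $\mathcal{X}_2=M_1$ are closed and disjoint in $\mathcal{X}$, so $(X,h,g)$ is a genuine topologically labeled datum in the sense of the earlier definitions.

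Finally I would derive the contradiction. By the first step, $Net(\mathcal{X}_2)=Net\bigl(p_0+v+M\bigr)=Net\bigl(p_0+M\bigr)=Net(\mathcal{X}_1)$ as subsets of $\mathbb{R}^d$; call this common nonempty set $Z$. If $Net$ separated $(X,h,g)$, there would exist disjoint embedded discs $D_1,D_2\subset\mathbb{R}^d$ with $Net(\mathcal{X}_j)\subset D_j$, whence $Z\subset D_1\cap D_2=\varnothing$, which is impossible since $Z\ne\varnothing$. Hence $Net$ does not separate $(X,h,g)$.

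There is no deep obstacle here; all the content sits in the opening linear-algebra remark, that a width bottleneck $k<n$ destroys injectivity of $f_1$ irrespective of $\sigma$. The only points needing a little care are bookkeeping ones: verifying that the constructed $X$ is a legitimate disjoint union of compact manifolds and that $\mathcal{X}_1,\mathcal{X}_2$ really are closed and disjoint in $\mathcal{X}$ (which is exactly why compactness of $M$ and disjointness of the two translated copies are imposed), and observing that the whole argument is insensitive to the choice of $\sigma$ and $Net_1$ precisely because non-injectivity of the affine part $x\mapsto Wx+b$ propagates through any map applied to it.
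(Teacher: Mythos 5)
Your proof is correct and rests on exactly the same idea as the paper's: since $k<n$ the matrix $W$ has a nontrivial kernel vector $v$, which forces differently labeled points to collapse to the same output of $f_1$ and hence of $Net$, contradicting separability by disjoint embedded discs. The only (cosmetic) difference is the witness dataset — you use two disjoint translates of a compact manifold along $v$, while the paper labels a ball and a surrounding annulus and picks two points on the line through the origin in the direction of $v$.
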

\begin{proof}

Let $X = \mathcal{X}= \{x\in\mathbb{R}^n, ||x||\leq 2 \}$.  Let  $\mathcal{X}_L =\mathcal{X}_1 \cupdot \mathcal{X}_2 $ where $\mathcal{X}_1=\{x\in\mathbb{R}^n, ||x||\leq 0.9 \}$ and $\mathcal{X}_2=\{x\in\mathbb{R}^n, 1 \leq ||x||\leq 2 \}$.  Choose $g:\mathcal{X}_L \longrightarrow \{l_1,l_2\}$ such that $g(\mathcal{X}_1)=l_1$ and $g(\mathcal{X}_2)=l_2$. Let $f_1$ be a function as defined in the Theorem. The matrix $W : \mathbb{R}^n \longrightarrow \mathbb{R}^k $ where $k < n$ has a nontrivial kernel. Hence, there is a non-trivial vector $v \in \mathbb{R}^n$ such that $W(v)=0$. Choose a point $p_1 \in M_1 $ and a point $p_2 \in M_2$ on the line that passes through the origin and has the direction of $v$. We obtain $W(p_1)=W(p_2)=0$. In other words, $f_1(p_1)=f_1(p_2)$. Hence $Net(p_1)=Net(p_2)$ and hence $Net(M_1) \cap Net(M_2) \neq \emptyset $ and so we cannot find two embedded disks that separate the sets $Net(M_1)$, $Net(M_2)$. 

\end{proof}

\begin{remark}
Note that in Theorem \ref{2222} the statement is independent of the depth of the neural  network. This is also related to the work \cite{johnson2018deep} which shows that skinny neural networks are not universal approximators. This is also related to the work in \cite{nguyen2018neural} where is was shown that a network has to be wide enough in order to successfully classify the input data. 
\end{remark}

\section{Action of a classification Neural Network on A Topological Space}
\label{action}
Given a network of the form \ref{Net}, we would like to consider how this network acts on the input topological space and deforms it as we pass from one layer to the next. For that we need first to establish a few definitions and notations.

 Consider a network $Net$ defined by the composition $Net=f_L \circ \cdots \circ f_1 $. For $0 \leq i \leq L $, we define the \textit{ $i^{th}$ head} of $Net$ to be 
\begin{equation}
    Net^{[i]}(x)=f_i \circ \cdots \circ f_2 \circ f_1 (x).
\end{equation}
We set $Net^{[0]}(x)=x$.  A layer function $f_{i}$ operates on its input $x^{[i]}$ and produces an output $x^{[i+1]}=f_{i}(x^{[i]})$. We will denote the initial input to the network by $x^{[1]}$. Note that by our convention we have $Net^{[i]}(x)=x^{[i+1]}$. Moreover, $Net^{[L]}(x)=Net(x)$.

By a slight abuse of notation, we will use $\mathcal{X}$ to denote the domain $\mathbb{R}^{n_0}$ of $Net$. 
 We want to study the function $Net$ by understanding how the topology and geometry of the elements of the sequence $\{\mathcal{X}^{[i]}:=Net^{[i]}(\mathcal{X}) | 0 \leq i \leq L \}$ change as we move through the network (right to left in the function composition).

We approach this by considering the following two goals:

\begin{enumerate}
    \item Understanding how each individual block acts, as each layer function act on its input domain. We address this point in \ref{firstpoint}.
    \item Understanding what the neural network as whole is trying to accomplish as a continuous function. This point is addressed this in \ref{secondpoint}.
\end{enumerate}
 To this end, we need to specify the type of layer functions we work with. One of the most popular layer functions is the Relu layer function. This takes the form $f_i(x)=Relu(W_i(x)+b_i)$ where $W_i$ is an $m\times n$ matrix, $b_i$ is a vector in $\mathbb{R}^m $, and $Relu:\mathbb{R}^m\longrightarrow \mathbb{R}^m $ is the Rectified Linear Unit activation function \cite{krizhevsky2012imagenet} defined by $Relu(x_1,\cdots,x_m)=(max(x_1,0),\cdots,max(x_m,0))$. Note that we distinguish between the Relu layer function, which is the function $f_i$, and the Relu activation function which is the function $Relu$. A network that consists of Relu layer functions, with the possible exception of the final layer, will be called a \textit{Relu network}.

\subsection{Relu Layer Functions as Topological Operations}
\label{firstpoint}
We now consider the action of the Relu activation function on topological spaces in $\mathbb{R}^n$.

Set $A \subset \mathbb{R}^n$. The Relu activation function $Relu : \mathbb{R}^n \longrightarrow \mathbb{R}^n$ acts on the
set $A$ in one or more of the following three ways:  

\begin{enumerate}
\item Quotienting  : The image $Relu(A)$ is obtained from $A$ by identifying certain parts of $A$, and is thus a quotient of $A$. In other words, $Relu(A)=A/\sim$ where $x\sim y$ if and only $Relu(x)=Relu(y)$.
\item Bending : In this case $Relu(A)$ is obtained from $A$ by bending the set $A$ at some locations. In this case $A$ and $Relu(A)$ are homeomorphic.
\item The identity action : In this case the $Relu$ function acts trivially on the set $A$ and $Relu(A)=A$. This occur when the coordinates of $x \in A$ are all positive.   
\end{enumerate}
Figure \ref{Relu} demonstrates three examples of the action of Relu on various sets in $\mathbb{R}^2$.

\begin{figure}[h]
  \centering
   {\includegraphics[width=\textwidth]{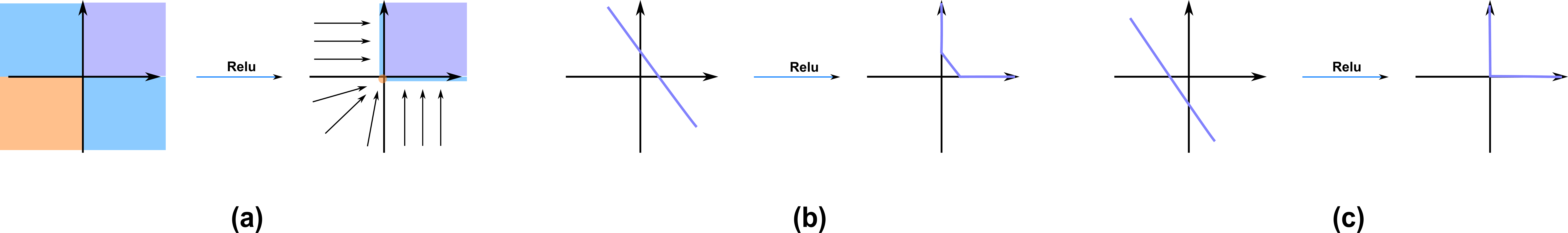}
    \caption{Example of the action of Relu function on sets in $\mathbb{R}^2$. (a) The Relu function acts on the entire plane $\mathbb{R}^2$ as follows : for the first quadrant this function acts as the identity, for the second this function projects all points to the $y$ axis, for the third quadrant everything gets mapped to the origin and finally for the fourth quadrant all points are projected to the $x$-axis. In this case, the Relu function does all the operations listed above : quotienting, bending and the identity action.  (b) This is an example of a bending. (c) This is an example of both quotienting and bending. }
  \label{Relu}}
\end{figure}

On the other hand a matrix $W$ acts linearly on a subset of $\mathbb{R}^n$ in a straightforward manner. If $A$ has full rank then $W(A)$ is homemorphic to $A$, and $W$ acts on the set $A$ via rotation, reflection and scaling. These transformations can be directly interpreted from the singular value decomposition of $W$.

If $W$ does not have full rank, then it also acts as a quotient map. More specifically, the set $W(A)$ is a quotient space of $A$, where pairs of points whose difference lies in the kernel (nullspace) of $A$ are identified. 

Finally, adding the vector $b$ simply constitutes a translation, leaving the underlying topological structure unchanged.

Putting all the above observations together and thinking of each type of the above continuous operations as a \textit{topological move} \footnote{The notion of topological moves is very common topology. It is usually utilized to describe a set of continuous operations, called moves, that one can utilize to move  a topological object from one state to another. One famous example is Reidemeister moves which are a set of topological moves that act on knots to transfer them from one state to another. }, one may think about the action of a neural network as a a sequence of finite moves : scaling, translation, rotation, reflection, bending and quotienting.

To address the second point mentioned in Section \ref{action} we need to specify the type of neural networks that we wish to consider, which in our case will be classification neural networks. Classification neural networks typically have a special layer function at the end where one uses the \textit{softmax activation function} instead of the Relu function (there are other types of classification neural networks but this is beyond the scope of our discussion here). Denote by $\Delta_n$ the $n^{th}$ simplex defined as : $\Delta_n =\{ (x_0,\cdots,x_{n+1} ) \in \mathbb{R}^{n+1}| \sum_{i=0}^{n+1} x_i=1, x_i \geq 0  \}$. Note that $\Delta_n$ is the convex hull of the points $\{v_0,\cdots ,v_{n} \}$ where $v_i=(0,...,1,...,0)\in\mathbb{R}^{n+1}$ with the lone $1$ in the $(i+1)^{th}$ coordinate. The points $v_i$, for $0\leq i \leq n$ are usually called the vertices of the simplex $\Delta_n$.

  The softmax function $softmax:\mathbb{R}^{n} \longrightarrow Int (\Delta_{n-1}) \subset \mathbb{R}^{n}$, 
defined by the composition $D \circ Exp  $ where $Exp : \mathbb{R}^n\to (\mathbb{R}^+)^n  $ is defined by : $ Exp ( x_1,\cdots,x_n  ) = ( \exp( x_1), \cdots,\exp( x_n)  ) $, and $D :\mathbb{R}^n \to \Delta_{n-1} $ is defined by :

$$ D ( x_1,\cdots,x_n  ) = ( x_1/\sum_{i=1}^n x_i, \cdots, x_n /\sum_{i=1}^n x_i ). $$ maps the entire Euclidean space $\mathbb{R}^n$ to the $(n-1)^{th}$ simplex $\Delta_{n-1}$. Usually $n$ is the number of labels in the classification problem. Each vertex $v_i$ in $\Delta_{n-1}$ corresponds to precisely one label $l_{i+1} \in \mathcal{Y} $ for $0 \leq i \leq n-1 $. We will call a Relu network that has a softmax layer at the end a \textit{softmax classification neural network.}

 We are now ready to consider a simple example.

\begin{example}
\label{example123}

Consider the dataset sampled from a torus embedded in $\mathbb{R}^3$ , labelled by the three colors yellow, purple, and green as indicated in  Figure \ref{second_example}. To make the problem more interesting we separate some of the yellow labeled point with the purple ones such that a purple ring is sandwiched between two yellow rings.
We train this network on the above dataset and trace the activations as demonstrated in Figure \ref{second_example}

Suppose that we want to use a softmax classification neural network to classify this data. Furthermore we wish to trace how the individual continuous layer functions that form the neural network act on the input dataset. To this end let $Net$ be a softmax classification neural network with 6 layers in which all layers map $\mathbb{R}^3$ to itself.

\begin{figure}[h]
  \centering
   {\includegraphics[width=0.99\textwidth]{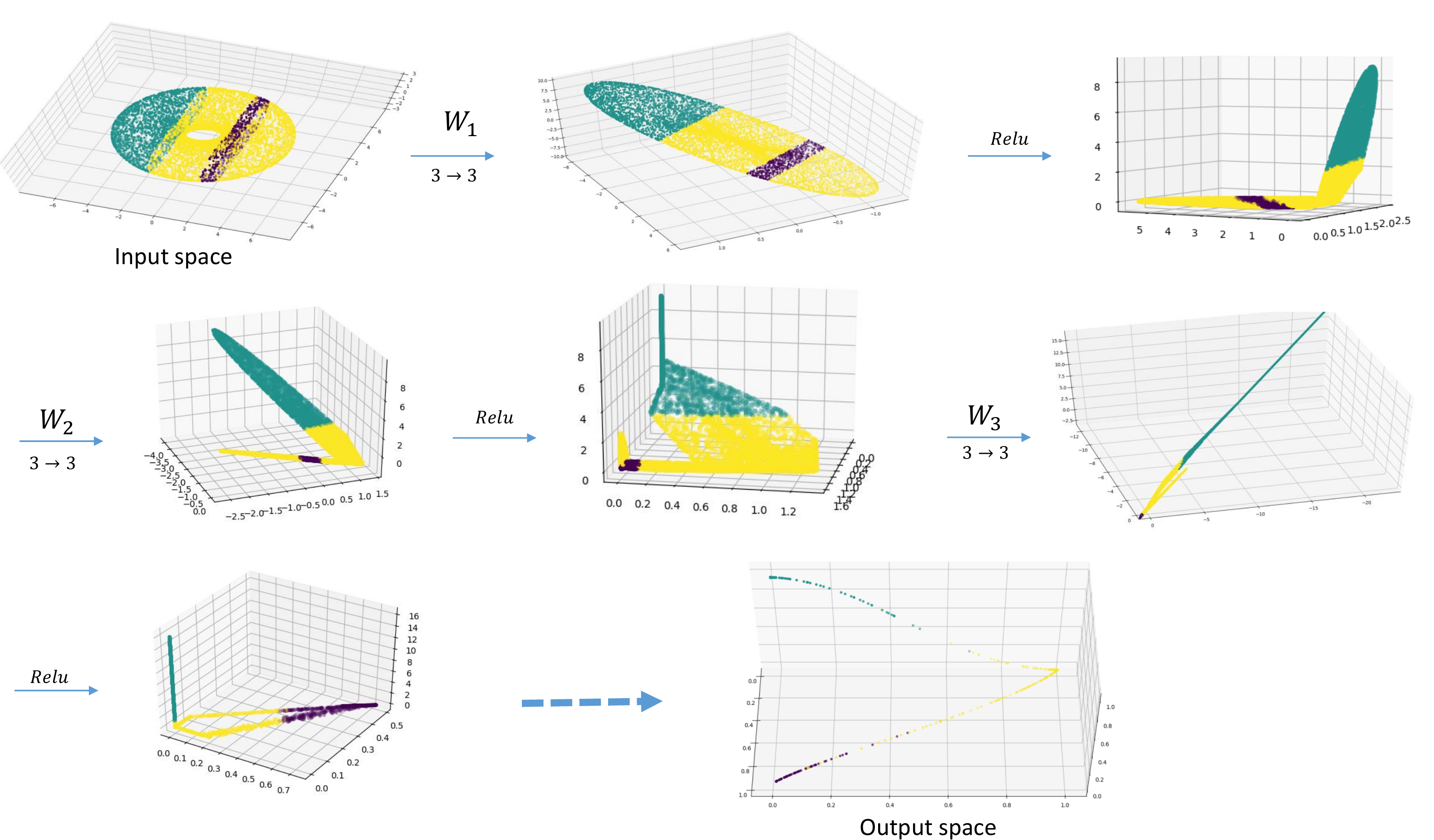}
    \caption{A neural network acting on torus with three labels.}
  \label{second_example}}
\end{figure}

Denote by $\mathcal{X}$ to the torus dataset. Figure \ref{second_example} shows the action of the function of the first few layers on the input dataset. We make the following observation :


\end{example}

\begin{enumerate}
    \item The matrix $W_1$ projects the torus to a disk. This is a quotienting operation, creating a two dimensional topological space.
    \item The Relu function bends the disk.
    \item The matrix $W_2$ reorients the space, and the following Relu function bends it in such a way that two yellow subsets can subsequently be identified via projection.
    \item The matrix $W_3$  quotients the space into a roughly  one-dimensional topological space.
    \item The Relu function that comes after $W_3$ bends the space and make the yellow points close to each other.
    \item The final few layers, not depicted, deform the space to fit inside $\Delta_2$. Note that the (previously disconnected) yellow components are now identified together in one single region.
\end{enumerate}

\begin{example}
\label{examples1}


Suppose that we want to use a classification neural network to classify the data given on the top left part in Figure \ref{annulus}. To this end let $Net$ be a neural network given by the composition $Net=f_6\circ f_5\circ f_4\circ f_3\circ f_2\circ f_1$. For $1\leq i \leq 5$, the maps are given by $f_i := Relu(W_i(x)+b_i)$ such that $W_1:\mathbb{R}^2\longrightarrow \mathbb{R}^5 $, $W_2:\mathbb{R}^5\longrightarrow \mathbb{R}^5 $, $W_3:\mathbb{R}^5\longrightarrow \mathbb{R}^2 $
and $W_j : \mathbb{R}^2\longrightarrow \mathbb{R}^2 $ for $ 4 \leq j \leq 5 $. Finally, the function, $f_6 = softmax(W_6(x)+b_6)$  where  $W_6:\mathbb{R}^2\longrightarrow \mathbb{R}^2 $. 

We train this network on the above dataset and trace the activations as demonstrated in Figure \ref{annulus}. In the Figure we visualize the activations in higher dimension by projecting them using Isomap \cite{tenenbaum2000global} to $\mathbb{R}^3$. Our choice of this algorithm as a dimensionality reduction algorithm is driven by the fact that the dataset we work with here is essentially a manifold; as such, projecting the space to a lower dimension with the Isomap algorithm should preserve most of the topological and geometric structure of the this space.  

\begin{figure}[h]
  \centering
   {\includegraphics[width=0.99\textwidth]{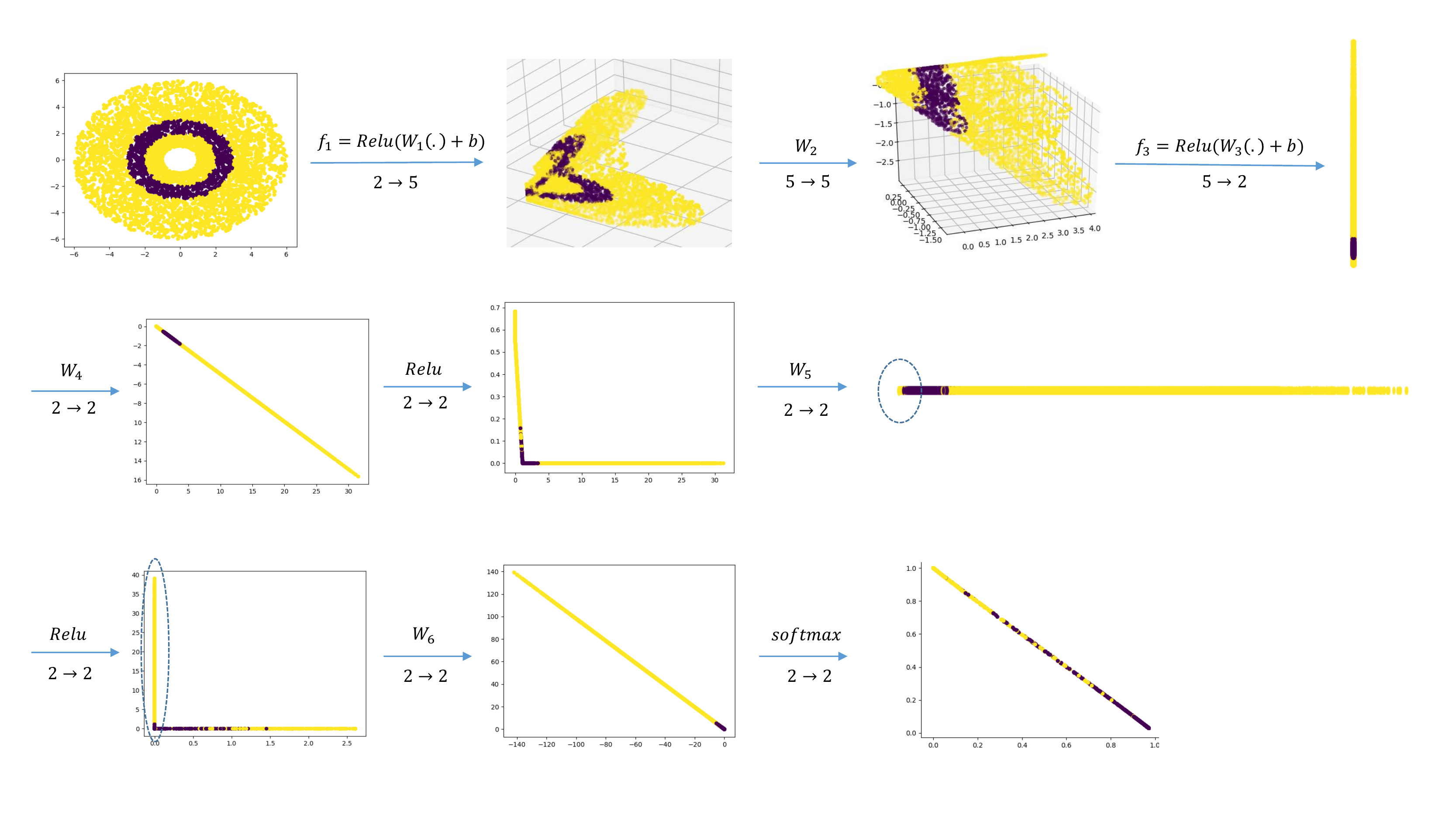}
    \caption{The topological operations performed by a network on data sampled from the annulus and colored by two lables. }
  \label{annulus}}
\end{figure}
Inspecting the activations in Figure \ref{annulus} we make the following observation: 

\begin{enumerate}
    \item A neural network can quotient the topological space either by using nonlinearity or by utilizing the linear part of a given layer function. This is the case with the map $f_3: \mathbb{R}^5\longrightarrow \mathbb{R}^2$. While the linear component is a projection onto $\mathbb{R}^2$, the network "chose" to project the space into $1-$ manifold since the second dimension is not needed for the final classification.
    \item  Note that the yellow components are separated by the purple one, and in order to map both of these parts to the same part of the space, the net has to glue these two parts together. Indeed, the neural network quotients parts of the space as it sees it necessary. This is visible in $W_5$, which acts as a projection, and again $W_6$. 
\end{enumerate}

\end{example}

\subsection{Collapsing the input topological space to the softmax codomain space}
\label{secondpoint}
As seen in Figure \ref{second_example}, we can think of the layer functions in a given neural network as a finite set of topological operations that act in sequence to deform the input space  $\mathcal{X}^{[0]}$ into the final space $\mathcal{X}^{[L]}$.

In general we do not have control over the shape of the input space $ \mathcal{X}^{[0]}$. On the other hand, the final shape $\mathcal{X}^{[L]}$ is determined by the final activation layer which in our case is the $n-1$ simplex $\Delta_{n-1}$, where $n$ is the number of labels in the classification problem.  Figure \ref{softmax} illustrates  topologically labeled data, which is essentially an annulus, with three labels. The corresponding codomain of the neural network that we may build to classify data sampled from this space will be $\Delta_2$ which is a triangle with vertices $(1,0,0),(0,1,0)$ and $(0,0,1)$ as indicted in Figure \ref{softmax} on the right. Hence the task of the network will be to deform this annulus to $\Delta_2$ in a way that respects the labels indicated by the colors.

\begin{figure}[h]
  \centering
   {\includegraphics[width=0.7\textwidth]{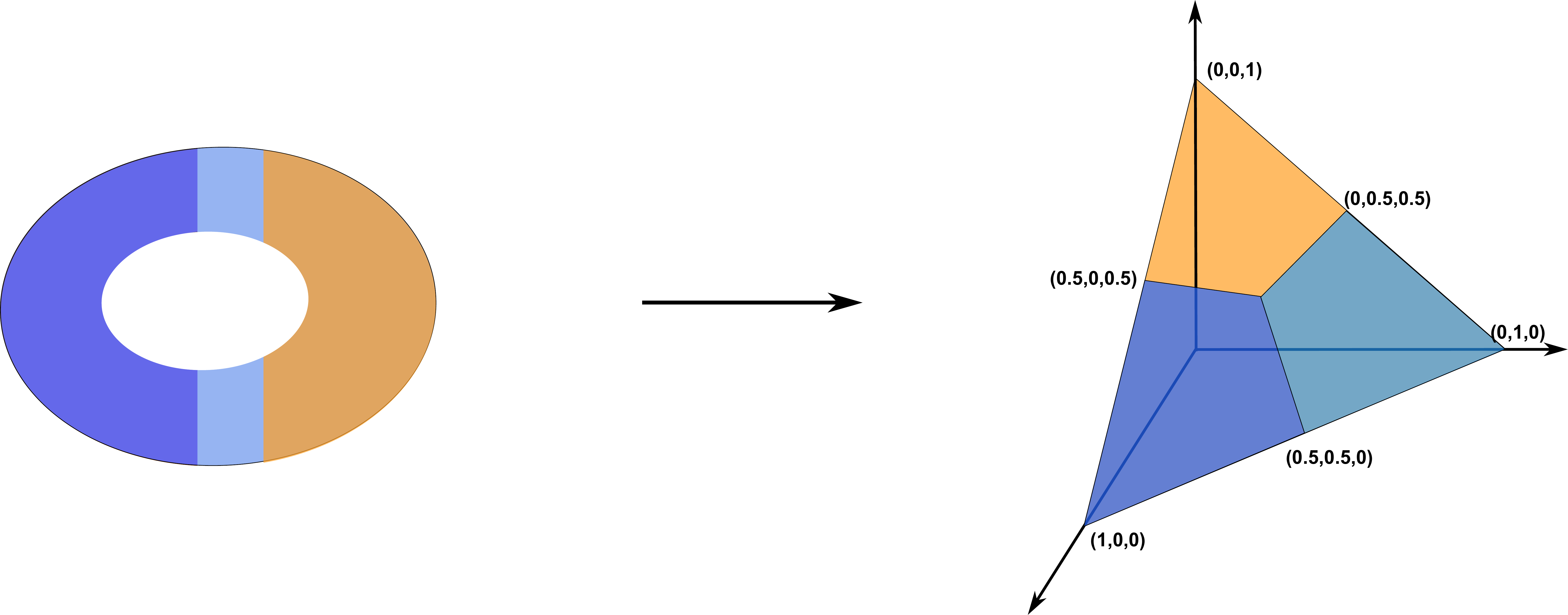}
    \caption{The softmax codomain for a dataset labeled with three labels.}
  \label{softmax}}
\end{figure}

In the example from Figure \ref{softmax} one can see that the neural network is trying to collapse all the topological information encoded by the original input space into a topological disk $\Delta_2$. Notice that the only geometric information that survives is related to the way the input topological space is labeled.  In order to address this point more precisely we need to recall the definition of Vonornoi sets and show the relationship of these sets to the softmax function.





\subsubsection{Softmax Activation Functions and Voronoi Sets}

We recall quickly the definition of a Voronoi diagram on general metric spaces. 

Let $(X,d)$ be a metric space and let $C\subset X$ be subset of $X$, called the \textit{the subset of centroids}. The \textit{Voronoi cell} at point $c\in C$, denoted by $VC(c)$ is defined to be the set of all points $y\in X$  that are closer to $c$ than to any other point in $C$. The collection of subsets $VC(c)$ for all $c$ in $C$ is by definition the Voronoi diagram, denoted by $VD(C)$ of the metric space $X$ with respect to the subset $C$.

These Voronoi diagrams are intimately related to the way we determine how a classification neural network determines the label of an input point.  Let $Net$ be a classification neural network with $n$ labels.  For an input $x\in \mathcal{X}$ the point $Net(x)$ is an element of $\Delta_{n-1}$. Recall that each vertex $v_i \in \Delta_{n-1}$ has a corresponding label $l_{i+1} \in \mathcal{Y}$. By definition, the point $x$ is assigned to the label $l_{i+1}$ by the neural network if and only if $Net(x) \in Int(VC( v_i ))$. Here $Int(A)$ denotes the interior of a set $A$. 

In other words, we divide the space $\Delta_{n-1}$ into $n$ disjoint sets, each one associated with a vertex of the simplex $\Delta_{n-1}$ (and thus with a unique label). These disjoint sets are precisely the Vonornoi sets of the vertices $\{v_i\}_{i=0}^{n-1}$.

So the task of a softmax classification neural network, viewed as a continuous map can be viewed as follows:

A softmax classification neural network tries to deform the input topological space $\mathcal{X}$ to the space $\Delta_{n-1}$ such that each subset $\mathcal{X}_{i+1}$ maps to the interior of the Voronoi cell $Int(VC(v_i))$ via a finite sequence of continuous topological operations: scaling, translation, rotation, reflection, bending and quotienting. Observe that this is consistent with Definition \ref{def1}. We record this in the following theorem.

\begin{theorem}
Let $(X,h,g)$ by a topologically labeled data with $h: X \longrightarrow R^{d_{in}}$ and $g: \mathcal{X} \subset \mathbb{R}^{d_{in}} \longrightarrow \{l_1, \cdots l_n \}$. A softmax classification neural network $Net : \mathbb{R}^{d_{in}} \to Int(\Delta_{n-1}) $ separates $(X,h,g)$ if and only if $Net(\mathcal{X}_{i+1}) \subset Int( VC(v_i)) $ for $0\leq i \leq n-1$.
\end{theorem}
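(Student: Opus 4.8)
The plan is to prove the two implications separately; the substantive half is the ``if'' direction, and it rests on a single geometric fact: the Voronoi cells of the vertices of $\Delta_{n-1}$ (for the Euclidean metric) are convex. To see this I would note that $\|v_i\| = \|v_j\| = 1$, so $\|y-v_i\| = \|y-v_j\|$ is equivalent to $\langle y,\, v_i-v_j\rangle = 0$; hence $VC(v_i)$ is the intersection of the polytope $\Delta_{n-1}$ with finitely many closed half-spaces and is therefore a convex polytope, whose relative interior $U_i := Int(VC(v_i))$ is open and convex. By construction $U_0,\dots,U_{n-1}$ are pairwise disjoint. (None of the earlier Urysohn/Tietze machinery is needed here; this is a characterization of a \emph{given} softmax net, not an existence statement.)

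For the ``if'' direction I would assume $Net(\mathcal{X}_{i+1})\subset U_i$ for $0\le i\le n-1$ and build the required discs by hand. First, each $\mathcal{X}_{i+1}=g^{-1}(l_{i+1})$ is compact: $\mathcal{X}=h(X)$ is compact as the continuous image of the compact manifold $X$; $\mathcal{X}_L$ is closed in $\mathcal{X}$; and $\{l_{i+1}\}$ is clopen in the discrete set $\mathcal{Y}$, so $\mathcal{X}_{i+1}$ is closed in the compact space $\mathcal{X}$. Thus $K_i := Net(\mathcal{X}_{i+1})$ is a compact subset of the open convex set $U_i$, and its convex hull $C_i := \mathrm{conv}(K_i)$ is compact (Carathéodory) and still contained in $U_i$ by convexity of $U_i$. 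The $C_i$ are then pairwise disjoint compact subsets of $\mathbb{R}^n$, so $2\rho := \min_{i\ne j} d(C_i,C_j) > 0$, and I would set $D_i := C_i + \overline{B}(0,\rho/2)$, the Minkowski sum with the closed $n$-ball of radius $\rho/2$. Each $D_i$ is compact, convex, and has nonempty interior, hence is homeomorphic to the closed $n$-disc; the triangle inequality gives $D_i\cap D_j = \emptyset$ for $i\ne j$; and $Net(\mathcal{X}_{i+1}) = K_i\subset D_i$. After the obvious reindexing this produces $n$ pairwise disjoint embedded $n$-discs $D_1,\dots,D_n$ with $Net(\mathcal{X}_j)\subset D_j$, which is exactly the data required by Definition \ref{def1} with $k=n$; hence $Net$ separates $(X,h,g)$.

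For the ``only if'' direction I would unwind the label-assignment rule recalled just above the statement: a softmax classifier predicts the label $l_{i+1}$ for an input $x$ exactly when $Net(x)\in U_i$, and the $U_i$ are the disjoint open Voronoi pieces carving up the ``decided'' part of $\Delta_{n-1}$. Reading ``$Net$ separates $(X,h,g)$'' for a softmax classifier as the assertion that the separating disc decomposition is the one the classifier itself induces---equivalently, that $Net$ predicts the correct label at every point of $\mathcal{X}_L$---the assignment rule immediately forces $Net(\mathcal{X}_{i+1})\subset U_i$ for all $i$.

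The step I expect to be the real obstacle is this last reconciliation. Definition \ref{def1} permits \emph{arbitrary} disjoint embedded discs, and one can easily arrange a softmax net whose class images are disjoint compacta separable by discs yet \emph{not} contained in the matching Voronoi cells (two tiny blobs placed on the same side of a bisector near the barycenter of $\Delta_{n-1}$ already do this). So the clean ``only if'' genuinely needs the interpretation above, or the extra---but entirely natural---hypothesis that the network's own predictions agree with $g$ on $\mathcal{X}_L$. The ``if'' direction, by contrast, is where the honest work lies, and convexity of the simplex-vertex Voronoi cells is precisely what rescues it: without such convexity, disjoint compact class images need not be separable by embedded discs at all, as the linked-data example in the excerpt illustrates.
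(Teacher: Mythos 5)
The paper records this theorem without proof --- it is offered as a summary of the informal Voronoi-cell discussion that precedes it, with only the remark ``observe that this is consistent with Definition \ref{def1}'' --- so there is no argument of the paper's to compare yours against; yours is the only actual proof on the table. Your ``if'' direction is correct and supplies exactly the missing content: the key lemma that the vertex Voronoi cells of $\Delta_{n-1}$ are convex (since $\|v_i\|=\|v_j\|$ makes each bisector a hyperplane through the origin's orthogonal complement of $v_i-v_j$, the cells are intersections of the simplex with half-spaces), so their relative interiors are disjoint open convex sets, and compactness of the class images $Net(\mathcal{X}_{i+1})$ then lets you fatten their convex hulls into pairwise disjoint embedded $n$-discs as Definition \ref{def1} demands. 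This fattening step is genuinely necessary rather than a pedantic flourish: the closed cells $VC(v_i)$ are only $(n-1)$-dimensional and share boundary faces, so they cannot themselves serve as the disjoint $n$-discs of Definition \ref{def1}; one needs the positive gap that your compactness argument provides.

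Your diagnosis of the ``only if'' direction is also correct, and it exposes a defect in the theorem as stated rather than in your argument. Under Definition \ref{def1}, ``separates'' only requires the existence of \emph{some} family of disjoint embedded discs containing the class images; nothing forces those discs to align with the Voronoi decomposition. A softmax net that maps $\mathcal{X}_1$ into $Int(VC(v_1))$ and $\mathcal{X}_2$ into $Int(VC(v_0))$ --- separating the classes but with the labels permuted --- already separates $(X,h,g)$ in the sense of Definition \ref{def1} while violating the containment $Net(\mathcal{X}_{i+1})\subset Int(VC(v_i))$. The stated equivalence therefore holds only if ``separates'' is strengthened, as you propose, to mean that the network's Voronoi-induced label assignment agrees with $g$ on $\mathcal{X}_L$; that is evidently what the authors intend, given the label-assignment rule stated just before the theorem, but it is not what Definition \ref{def1} says. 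Identifying this, and proving the nontrivial half carefully, is the right outcome here.
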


\begin{remark}
It is worth mentioning here that while the softmax classification neural network's final goal is to map $Net(\mathcal{X}_{i+1})$ to $Int( VC(v_i) )$ for $0\leq i \leq n-1$, the network does not manipulate each subset $\mathcal{X}_{i+1}$ independently, but rather must manipulate the entire space $\mathcal{X}^{[k]}$ at each layer $k$ to achieve the mapping from $\mathcal{X}^{[0]}$ to $Int(\Delta_{n-1} )$ in a way that guarantees each labeled subset of the input space is mapped to the correct cell in the output simplex.
\end{remark}



\section{Topology and the parameter landscape of a deep network}
\label{topolgist}
Given the above discussion, one may think of a neural network as a ``topologist" that is trying to deform a space $A$ to another space $B$ by a finite sequence of five topological operations: bending, scaling, translating, rotating and quotienting. Indeed in a Relu neural network, each layer function acts on its input space by a combination of the above continuous operations.  

On the other hand, given two spaces $A$ and $B$ two topologists might choose to deform the space $A$ to the space $B$ via two different sequences of continuous operations. For instance, in Figure \ref{second_example} the first operation is a projection, which is essentially a quotienting operation, of the torus into a topological $2$-disk. This step can be done in infinitely many topologically equivalent manners. 

This observation is very closely related to the fact that the parameter landscape of a deep net has a large number of local minima that can consistently provide similar levels of performance on multiple experiments \cite{choromanska2015loss}. Indeed, local minima are likely to have an error very close to that of the global minimum \cite{sagun2015explorations,choromanska2015loss}. From this perspective and given the topological setting, it seems reasonable to conjecture that equivalent local minima correspond to distinct sequences of continuous operations that yield topologically equivalent sequences of spaces. 

\section{Concluding Remarks}
\label{conclusions}

In this section we discuss a few important topological considerations that were omitted in our analysis above. The first point is that the analogy made in Subsection \ref{topolgist} between a neural network and a topologist is not always accurate as demonstrated by the following example. Consider a new neural network $Net$ defined on the same dataset given in Figure \ref{second_example} and given by  $Net =Net_1 \circ f_1 $ where $f_1 : \mathbb{R}^3 \longrightarrow \mathbb{R}^2$. We saw in Example \ref{example123} that the matrix $W_1$ naturally projects the torus to a disk in this step, so it may seem reasonable to reduce the dimension of our output after the first layer. However, on several training experiments $Net$ given above yielded the projection given in Figure \ref{failing}.

\begin{figure}[h]
  \centering
   {\includegraphics[width=0.7\textwidth]{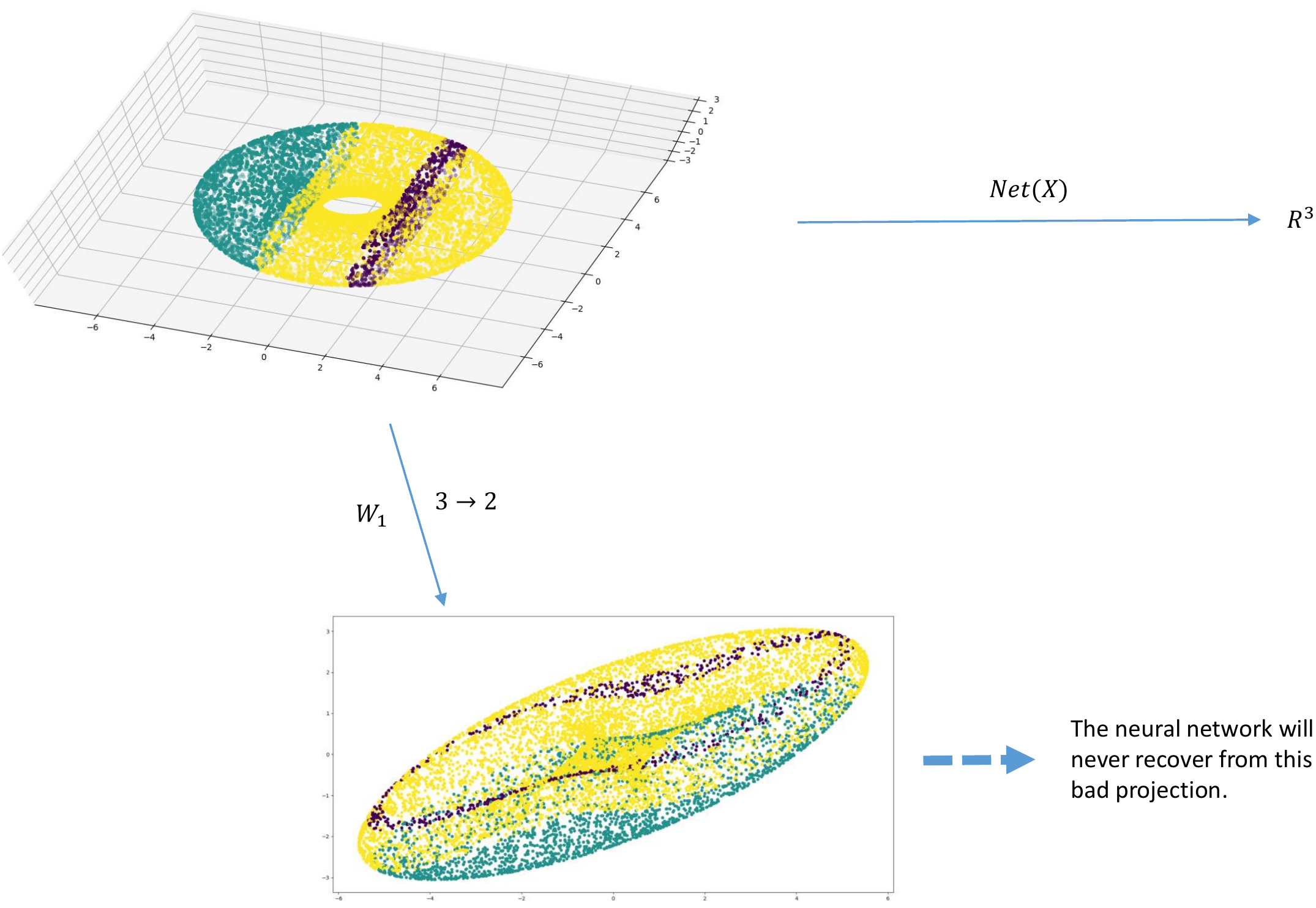}
    \caption{A network whose first map goes to $\mathbb{R}^2$ might lead the network, after training, to choose a projection that yields a high proportion of mislabled classifications. The network cannot recover from such "bad choice" of mapping, no matter how deep we try to make the network. }
  \label{failing}}
\end{figure}

The problem with such projections is that the neural network cannot recover from it no matter how deep we make it, because the parts of the topological space corresponding to the different labels have become mutually immersed. Moreover, this demonstrates how neural networks in general need more room to manipulate the space than a topologist might require. 



\subsection{The role of Linking and Knotting In Data}

We briefly discuss the role of knotting and linking when working with data and neural networks.

\subsubsection{Knotting}



For our purposes, the following results can be combined to provide an upper bound for the number of dimensions required for a softmax classification neural networks to separate topological labelled data.  

The General Position Theorem asserts that any $m$-manifold unknots in $R^n$ provided $n\geq 2m+2$. The theorem holds in the smooth and piece-wise linear setting \cite{rourke2012introduction}. This means that if two PL-embeddings $f,g:M^k\hookrightarrow N^n$ of a compact PL manifold of dimension $k$ are homotopic and $n\geq 2k+2$, then they are PL-isotopic.  We can adapt this result to fit our framework in the following proposition.

\begin{proposition}
\label{unknottingMultiples}
Let $X = M_1^{i_1} \cupdot M_2^{i_2} \cdots  \cupdot M_k^{i_k}$ be a disjoint union of $k$ compact manifolds. Let $h_1,h_2: X \longrightarrow \mathbb{R}^n$ be two PL-linear embeddings of $X$. If $h_1$ and $h_2$ are homotopic and $n\geq 2\times \max\{i_1,\cdots i_k\}+2 $ then $h_1$ and $h_2$ are isotopic.
\end{proposition}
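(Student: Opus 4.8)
The plan is to reduce the statement about the disjoint union $X = M_1^{i_1} \cupdot \cdots \cupdot M_k^{i_k}$ to the single-manifold General Position / unknotting statement already quoted, applied componentwise together with a handling of the pairwise interactions between components. First I would set $m = \max\{i_1,\dots,i_k\}$, so that $X$ is (the disjoint union of pieces each of which is) a manifold of dimension at most $m$, with total dimension $\dim X = m$ in the sense that no component exceeds $m$. The hypothesis $n \ge 2m+2$ is exactly the range in which embeddings of an $m$-manifold into $\mathbb{R}^n$ are classified up to isotopy by homotopy, so morally the result should follow; the work is in making "disjoint union" interact correctly with "isotopy."

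The key steps, in order: (1) Record that a PL-linear embedding $h_j\colon X \hookrightarrow \mathbb{R}^n$ restricts to a PL embedding $h_j|_{M_\ell^{i_\ell}}$ of each component, and that a homotopy $H\colon X\times[0,1]\to\mathbb{R}^n$ from $h_1$ to $h_2$ restricts to a homotopy of each component. (2) Apply the quoted General Position Theorem to each component separately: since $n \ge 2m+2 \ge 2i_\ell + 2$, the homotopic PL embeddings $h_1|_{M_\ell}$ and $h_2|_{M_\ell}$ are PL-isotopic, say via an ambient isotopy $F_\ell$ of $\mathbb{R}^n$ (upgrading isotopy of embeddings to ambient isotopy is standard in the PL category in this codimension, and may be cited from the same reference \cite{rourke2012introduction}). (3) The remaining issue is that the ambient isotopies $F_1,\dots,F_k$ realigning the individual components need not be compatible — performing $F_1$ may drag $M_2$ around. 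Here I would invoke general position again for the \emph{disjoint union}: two disjoint submanifolds of dimensions $i_a, i_b$ in $\mathbb{R}^n$ can be isotoped to stay disjoint throughout a given isotopy provided $n \ge i_a + i_b + 1$, and since $n \ge 2m+2 > i_a + i_b + 1$ this is comfortably satisfied. So one builds the ambient isotopy for $X$ by correcting the components one at a time, using a relative general position argument to keep the already-corrected components fixed (or at least disjoint and correctly placed) while moving the next one — concretely, work in a regular neighborhood of the component being moved that is disjoint from the others, which exists because the codimension bound forces the relevant intersections to be empty after a preliminary perturbation.

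The main obstacle I anticipate is precisely step (3): the honest bookkeeping needed to assemble the componentwise ambient isotopies into a single ambient isotopy of $\mathbb{R}^n$ carrying $h_1$ to $h_2$ on all of $X$ simultaneously. This is where the hypothesis must be used in the sharper "disjoint" form $n \ge i_a + i_b + 1$ for distinct components rather than merely $n \ge 2i_\ell + 2$ per component, and one has to be careful that the homotopy $H$ itself can be taken to keep components disjoint (it is a homotopy of maps of $X$, hence automatically respects the decomposition, but intermediate maps need not be embeddings, so one argues at the level of the final ambient isotopy rather than the given homotopy). I would present this via iterated application of an ambient-isotopy-extension lemma in general position, citing \cite{rourke2012introduction} for the PL technology, and remark that the smooth case is identical with the smooth general position and isotopy-extension theorems. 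The verification that $\dim M_a + \dim M_b + 1 \le n$ under the stated hypothesis is the only inequality to check, and it is immediate.
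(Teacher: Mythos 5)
Your argument is correct in outline, but it is considerably more elaborate than what the paper does: the paper offers no proof at all for Proposition \ref{unknottingMultiples}, presenting it as an immediate adaptation of the quoted General Position Theorem. The quickest honest route is to observe that the general-position unknotting theorem holds for compact \emph{polyhedra}, not just connected manifolds, and a disjoint union of compact manifolds of dimensions $i_1,\dots,i_k$ is a compact polyhedron of dimension $m=\max\{i_1,\dots,i_k\}$, so with $n\geq 2m+2$ the statement is literally a special case and no componentwise assembly is needed. Your componentwise approach also works and is the right thing to do if one only has the single-manifold version in hand; your step (3) correctly identifies the real content (assembling the ambient isotopies without disturbing already-placed components, via pushing the isotopy track of one component off the others and supporting the extension in a regular neighborhood of that track). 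One small correction there: the track of an isotopy of $M_b$ is an $(i_b+1)$-dimensional polyhedron, so keeping it off the $i_a$-dimensional $h_2(M_a)$ by general position requires $i_a+(i_b+1)<n$, i.e.\ $n\geq i_a+i_b+2$, not $n\geq i_a+i_b+1$ as you wrote; the hypothesis $n\geq 2m+2\geq i_a+i_b+2$ still covers this comfortably, so the slip is harmless, but the inequality you need is the one with the extra dimension for the time parameter.
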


The general position theorem asserts all embeddings of the manifold $M^k$ are essentially equivalent as long as $k$ is sufficiently large. In our setting, this means that data is easier to manipulate once it is embedded in higher dimensions. This explains our choice of some of the layer functions in Example \ref{examples1}.



\subsubsection{Linking}
Unlinking between two general manifolds \cite{332289} can be given as follows.

\begin{definition}
\label{linking1}
Two disjoint submanifolds $A$ and $B$ of a manifold $M^n$ are unlinked if we can find disjoint embedded $m$-dimensional discs $D_1,D_2 \subset M$ such that $A\subset D_1$ and $B \subset D_2$.  The manifolds $A$ and $B$ are said to be \textit{linked} if they are not unlinked.
\end{definition}


Note that this definition is related to our definition of a separating function in Definition \ref{def1}. However, the conditions for being unlinked are more strict than the conditions we set in Definition \ref{def1}.


In our experimentation we observed that manipulating data is easier when we give the layers of the neural network a dimension close to the dimension of the general position theorem, or higher, especially in the first few layers where the network requires "extra room" to perform manipulations to untangle the space. We plan to investigate this observation further in a future study. We conclude our discussion with an example about this point.

\begin{example}

Consider the network $Net: \mathbb{R}^3\longrightarrow \mathbb{R}^2 $ defined by the composition $Net=f_5\circ f_4 \circ f_3 \circ f_2 \circ f_1$ where $f_1:\mathbb{R}^3\longrightarrow \mathbb{R}^7 $, $f_2:\mathbb{R}^7\longrightarrow \mathbb{R}^7 $, $f_3:\mathbb{R}^7\longrightarrow \mathbb{R}^7 $, $f_4:\mathbb{R}^7\longrightarrow \mathbb{R}^3 $ and $f_5:\mathbb{R}^3\longrightarrow \mathbb{R}^2 $. The activations are described in Figure \ref{unlinking}.

\begin{figure}[h]
  \centering
   {\includegraphics[width=\textwidth]{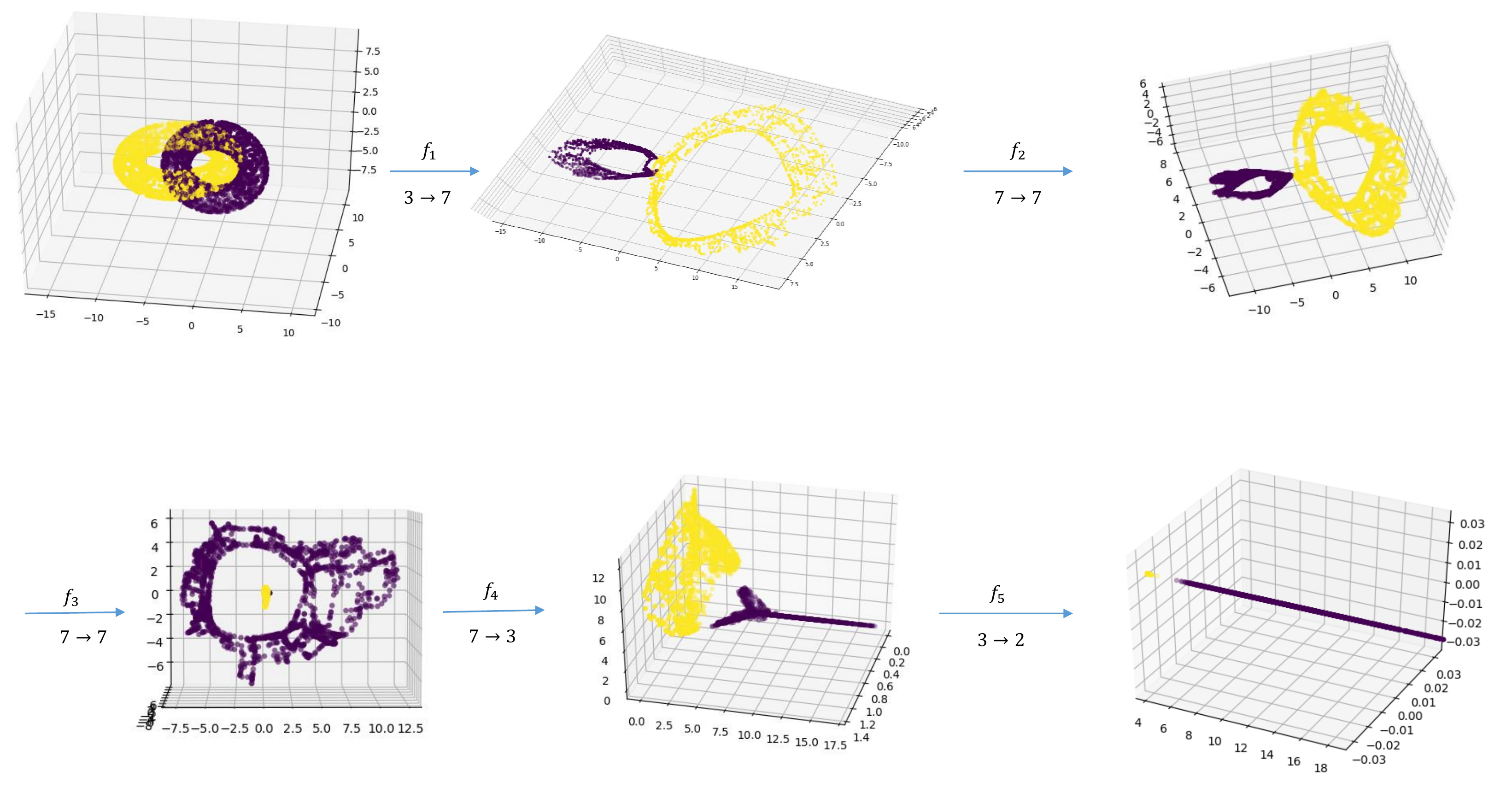}
    \caption{Unlinking the data using a neural net.}
  \label{unlinking}}
\end{figure}

Note that the neural network cannot unlink the two tori given in the input if we do not go to a higher dimension. Moreover, to visualize the higher dimensional activations in Figure \ref{unlinking} we project them to $\mathbb{R}^3$ using Isomap \cite{tenenbaum2000global} as we described in Example \ref{examples1}.

\end{example}

\bibliographystyle{abbrv}

\bibliography{refs_2}

\end{document}